\documentclass{article} 
\usepackage{nips11submit_e,times}

\usepackage{cite}
\usepackage{times}
\usepackage{epsfig}
\usepackage{graphicx}
\usepackage{amsmath}
\usepackage{amssymb}

\usepackage{lineno}
\usepackage{color}
\usepackage{rotating}
\usepackage{amsmath,amscd,amssymb,amsfonts,amsthm}
\usepackage{algorithm, algorithmic}
\usepackage{latexsym}
\usepackage{psfrag}
\usepackage{longtable}
\usepackage{epstopdf}
\usepackage{subfigure}

\renewcommand{\mathbf}{\boldsymbol}

\newcommand{\bb}{\mathbf{b}}
\newcommand{\ee}{\mathbf{e}}

\newcommand{\g}{\mathbf{g}}

\newcommand{\xx}{\mathbf{x}}

\newcommand{\zz}{\mathbf{z}}
\newcommand{\1}{\mathbf{1}}
\newcommand{\0}{\mathbf{0}}
\newcommand{\llambda}{\mathbf{\lambda}}

\newcommand{\albf}{\mathbf{\alpha}}
\newcommand{\bebf}{\mathbf{\beta}}

\renewcommand{\Re}{{\mathbb{R}}}

\newcommand{\subjto}{\mbox{s.t.}}
\newcommand{\sign}{\mathcal{I}}

\newcommand{\ie}{{i.e., }}

\newcommand{\mypar}[1]{{\par \noindent \textbf{#1}}}

\newtheorem{theorem}{Theorem}
\newtheorem{corollary}{Corollary}

\newtheorem{remark}{Remark}

\newfloat{alg}{thp}{lop}
\floatname{alg}{Algorithm}

\usepackage[pagebackref=false,breaklinks=true,letterpaper=true,colorlinks,bookmarks=false]{hyperref}

\title{On the Lagrangian Biduality of Sparsity Minimization Problems}

\author{
Dheeraj~Singaraju, Allen~Y.~Yang, and S.~Shankar~Sastry \\
Department of EECS\\
University of California, Berkeley\\
Berkeley, CA 94720 \\
\texttt{\{dheeraj,yang,sastry\}@eecs.berkeley.edu} \\
\And
Ehsan~Elhamifar and Roberto~Tron\\
Center for Imaging Science\\
Johns Hopkins University \\
Baltimore, MD 21218 \\
\texttt{\{ehsan,tron\}@cis.jhu.edu}
}

%

\nipsfinalcopy 

\begin{document}

\maketitle

\begin{abstract}
Recent results in Compressive Sensing have shown that, under certain conditions, the solution to an underdetermined system of linear equations with sparsity-based regularization can be accurately recovered by solving convex relaxations of the original problem. In this work, we present a novel primal-dual analysis on a class of sparsity minimization problems. We show that the Lagrangian bidual (i.e., the Lagrangian dual of the Lagrangian dual) of the sparsity minimization problems can be used to derive interesting convex relaxations: the bidual of the $\ell_0$-minimization problem is the $\ell_1$-minimization problem; and the bidual of the $\ell_{0,1}$-minimization problem for enforcing group sparsity on structured data is the $\ell_{1,\infty}$-minimization problem. The analysis provides a means to compute per-instance non-trivial lower bounds on the (group) sparsity of the desired solutions. In a real-world application, the bidual relaxation  improves the performance of a sparsity-based classification framework applied to robust face recognition.
\end{abstract}

\section{Introduction}
The last decade has seen a renewed interest in the problem of solving an underdetermined system of equations $A\xx = \bb$, $A \in \Re^{m \times n}, \bb \in \Re^m$, where $m << n$, by regularizing its solution to be sparse, \ie having very few non-zero entries. Specifically, if one aims to find $\xx$ with the least number of nonzero entries that solves the linear system, the problem is known as $\ell_0$-minimization:
\begin{equation}
\label{eq:entry}
(P_0): \quad \xx_0^* = \underset{\xx \in \Re^n}{\arg\!\min}~ \|\xx\|_0\quad \subjto \quad A\xx = \bb.
\end{equation}

The problem $(P_0)$ is intended to seek \emph{entry-wise sparsity} in $\xx$ and is known to be NP-hard in general. In Compressive Sensing (CS) literature, it has been shown that the solution to \eqref{eq:entry} often can be obtained by solving a more tractable linear program, namely, $\ell_1$-minimization \cite{DonohoD2003,CandesE2006}:
\begin{equation}
\label{eq:l1min}
(P_1): \quad \xx_1^* = \underset{\xx \in \Re^n}{\arg\!\min}~ \|\xx\|_1\quad \subjto \quad A\xx = \bb.
\end{equation}
This unconventional equivalence relation between $(P_0)$ and $(P_1)$ and the more recent numerical solutions \cite{BrucksteinA2009,LorisI2009} to efficiently recover high-dimensional sparse signal have been a very competitive research area in CS. Its broad applications have included sparse error correction \cite{CandesE2005}, compressive imaging \cite{WakinM2006}, image denoising and restoration \cite{ProtterM2009,MairalJ2008}, and face recognition \cite{WrightJ2009,ElhamifarE2011}, to name a few.

In addition to enforcing entry-wise sparsity in a linear system of equations, the notion of \emph{group sparsity} has attracted increasing attention in recent years \cite{StojnicM2009,EldarY2009,ElhamifarE2011}. In this case, one assumes that the matrix $A$ has some underlying structure, and can be grouped into blocks: $A = \begin{bmatrix} A_1 & \cdots & A_K \end{bmatrix}$, where $A_k \in \Re^{m \times d_k}$ and $\sum_{k=1}^K d_k = n$. Accordingly, the vector $\xx$ is split into several blocks as $\xx^\top = \begin{bmatrix} \xx_1^\top & \ldots   & \xx_K^\top\end{bmatrix}$, where $\xx_k \in \Re^{d_k}$. In this case, it is of interest to estimate $\xx$ with the least number of blocks containing non-zero entries.
The group sparsity minimization problem is posed as
\begin{equation}
\begin{split}
(P_{0,p}):\quad \xx_{0,p}^*  & = \underset{\xx}{\arg\!\min} \sum_{k=1}^K \sign(\|\xx_k\|_p>0), \quad \subjto \quad A\xx \doteq  \begin{bmatrix} A_1 & \cdots & A_K \end{bmatrix} \begin{bmatrix} \xx_1 \\ \vdots \\ \xx_K \end{bmatrix} = \bb,
\end{split}
\label{eq:group}
\end{equation}
where $\sign(\cdot)\in\Re$ is the indicator function. Since the expression $\sum_{k=1}^K \sign(\|\xx_k\|_p>0)$ can be written as $\|\begin{bmatrix} \|\xx_1\|_p & \cdots & \|\xx_K\|_p \end{bmatrix}\|_0$, it is also denoted as $\ell_{0,p}(\xx)$, the $\ell_{0,p}$-norm of $\xx$.


Enforcing group sparsity exploits the problem's underlying structure and can improve the solution's interpretability.   For example, in a sparsity-based classification (SBC) framework applied to face recognition, the columns of $A$ are vectorized training images of human faces that can be naturally grouped into blocks corresponding to different subject classes, $\bb$ is a vectorized query image, and the entries in $\xx$ represent  the coefficients of linear combination of all the training images for reconstructing $\bb$. Group sparsity lends itself naturally to this problem since  it is desirable to use images of the smallest number of subject classes to reconstruct and subsequently classify a  query image.

Furthermore, the problem of robust face recognition has considered an interesting modification known as the \emph{cross-and-bouquet} (CAB) model: $\bb=A\xx+\ee$, where $\ee \in \Re^m$ represents possible sparse error corruption on the observation $\bb$ \cite{WrightJ2008}. It can be argued that the CAB model  can be solved as a group sparsity problem in \eqref{eq:group}, where the coefficients of $\ee$ would be the $(K+1)^\text{th}$ group.  However,  this problem has a trivial solution for $\ee = \bb$ and $\xx = \0$, which would have the smallest possible group sparsity. Hence, it is necessary to further regularize the entry-wise sparsity in $\ee$.

To this effect, one considers a mixture of the previous two cases, where one aims to enforce entry-wise sparsity as well as group sparsity such that $\xx$ has very few number of non-zero blocks \emph{and} the reconstruction error $\ee$ is also sparse. The \emph{mixed sparsity} minimization problem can be posed as
\begin{equation}
\begin{split}
(MP_{0,p}):  ~~ \{\xx^*_{0,p}, \ee^*_0\}  & = \underset{(\xx,\ee)}{\arg\!\min}~\ell_{0,p}(\xx) + \gamma\|\ee\|_0, ~~ \subjto ~~ \begin{bmatrix} A_1 & \cdots & A_K \end{bmatrix} \begin{bmatrix} \xx_1 \\ \vdots \\ \xx_K \end{bmatrix} = \bb+\ee,
\end{split}
\label{eq:mixed}
\end{equation}
where $\gamma \ge 0$ controls the tradeoff between the entry-wise sparsity and group sparsity. 

Due to the use of the counting norm, the optimization problems in
\eqref{eq:group} and \eqref{eq:mixed} are also NP-hard in
general. Hence, several recent works have focused on developing
tractable convex relaxations for these problems. In the case of group
sparsity, the relaxation involves replacing the $\ell_{0,p}$-norm with
the $\ell_{1,p}$-norm, where $\ell_{1,p}(\xx)\doteq
\bigl\|\begin{bmatrix} \|\xx_1\|_p & \cdots &
  \|\xx_K\|_p \end{bmatrix}\bigr\|_1 = \sum_{k=1}^K \|\xx_k\|_p$.
These relaxations are also used for the mixed sparsity case \cite{ElhamifarE2011}. 

In this work, we are interested in deriving and analyzing convex relaxations for general sparsity minimization problems. In the entry-wise case, the main theoretical understanding of the link between the original NP-hard problem in \eqref{eq:entry} and its convex relaxation has been given by the simple fact that the $\ell_1$-norm is a convex surrogate of the $\ell_0$-norm. However, in the group sparsity case, a similar relaxation produces a family of convex surrogates, \ie  $\ell_{1,p}(\xx)$, whose value depends on  $p$. This raises the question whether there is a preferable value of $p$ for the relaxation of the group sparsity minimization problem? In fact, we consider the following more important question:

\noindent \emph{Is there a unified framework for deriving convex relaxations of general sparsity recovery problems?}




\subsection{Paper contributions}
We present a new optimization-theoretic framework based on Lagrangian duality for deriving convex relaxations of sparsity minimization problems. Specifically, we introduce a new class of equivalent optimization problems for $(P_0)$, $(P_{0,p})$ and $(MP_{0,p})$, and derive the Lagrangian duals of the original NP-hard problems. We then consider the Lagrangian dual of the Lagrangian dual to get a new optimization problem that we term as the \emph{Lagrangian bidual} of the primal problem. We show that the Lagrangian biduals are convex relaxations of the original sparsity minimization problems. Importantly, we show that the Lagrangian biduals for the $(P_0)$ and $(P_{0,p})$ problems correspond to minimizing the $\ell_1$-norm  and the $\ell_{1,\infty}$-norm, respectively.

Since the Lagrangian duals for  $(P_0)$, $(P_{0,p})$ and
$(MP_{0,p})$ are linear programs, there is no duality gap
between the Lagrangian duals and the corresponding Lagrangian
biduals. Therefore, the  bidual based convex relaxations can be
interpreted as maximizing the Lagrangian duals of the original
sparsity minimization problems. This provides new interpretations for
the relaxations of sparsity minimization problems. Moreover, since the
Lagrangian dual of a minimization problem provides a lower bound for
the optimal value of the primal problem, we show that the optimal objective value of the convex
relaxation provides a non-trivial lower bound on the sparsity of the  true solution to the primal problem. 

\section{Lagrangian biduals for sparsity minimization problems}
In what follows, we will derive the Lagrangian bidual for the mixed sparsity minimization problem, which generalizes the entry-wise sparsity and group sparsity cases (also see Section \ref{sec:results}). Specifically, we will derive the Lagrangian bidual for the following optimization problem:
\begin{equation}
\begin{split}
\xx^*  & = \underset{\xx}{\arg\!\min}~ \sum_{k=1}^K \big[\alpha_k \sign(\|\xx_k\|_p>0) + \beta_k \| \xx_k \|_0 \big] , ~~ \subjto ~~ \begin{bmatrix} A_1 & \cdots & A_K \end{bmatrix} \begin{bmatrix} \xx_1 \\ \vdots \\ \xx_K \end{bmatrix} = \bb,
\end{split}
\label{eq:primal0}
\end{equation}
where $\forall k = 1,\ldots,K: \alpha_k \ge 0$ and $\beta_k \ge 0$. Given any unique, finite solution $\xx^*$ to \eqref{eq:primal0}, there exists a constant $M>0$ such that the absolute values of the entries of $\xx^*$ are less than $M$, namely, $\|\xx^*\|_\infty \le M$. Note that if \eqref{eq:primal0} does not have a unique solution, it might not
be possible to choose a finite-valued $M$ that upper bounds all the solutions. In this case, a finite-valued $M$ may be viewed as a regularization term for the desired solution. To this effect, we consider the following modified version of \eqref{eq:primal0} where we introduce the box constraint that $\|\xx\|_\infty \le M$:
\begin{equation}
\begin{split}
\xx_\text{primal}^*  & = \underset{\xx}{\arg\!\min}~ \sum_{k=1}^K \big[\alpha_k \sign(\|\xx_k\|_p>0) + \beta_k \| \xx_k \|_0 \big] , ~~ \subjto ~~ A\xx = \bb \text{ and } \|\xx\|_\infty \le M,
\end{split}
\label{eq:primal1}
\end{equation}
where $M$ is chosen as described above to ensure that the optimal values of \eqref{eq:primal1} and \eqref{eq:primal0}  are the same.

\mypar{Primal problem.} We will now frame an equivalent optimization problem for \eqref{eq:primal1}, for which we introduce some new notation. Let $\zz \in \{0,1\}^n$ be an entry-based sparsity indicator for $\xx$, namely, $z_i =0$ if $x_i =0$ and $z_i=1$ otherwise. We also introduce a group-based sparsity indicator vector $\g \in \{0,1\}^K$, whose $k^\text{th}$ entry $g_k$  denotes whether the $k^\text{th}$ block $\xx_k$ contains non-zero entries or not, namely, $g_k = 0$ if $\xx_k = \0$ and $g_k=1$ otherwise.  To express this constraint, we introduce a matrix $\Pi \in \{0,1\}^{n \times K}$, such that $\Pi_{i,j} = 1$ if the $i^\text{th}$ entry of $\xx$ belongs to the $j^\text{th}$ block and $\Pi_{i,j} = 0$ otherwise.  Finally,  we denote the positive component and negative component of $\xx$ as $\xx_+\ge 0$ and $\xx_-\ge 0$, respectively, such that $\xx = \xx_+ - \xx_-$. 

Given these definitions, we see that \eqref{eq:primal1} can be reformulated as
\begin{equation}
\label{eq:primal}
\begin{split}
& \! \{\xx_+^*,\xx_-^*,\zz^*,\g^*\}  \!=\!\!   \underset{\{\xx_+,\xx_-,\zz,\g\}}{\arg\!\min} \big[\albf^\top \g + \bebf^\top\zz\big] , ~ \subjto  \text{ (a) } \xx_+ \ge 0, \text{ (b) } \xx_- \ge 0, \text{ (c) } \g \in \{0,1\}^K, \!\!\\
&  \text{(d) } \zz \in \{0,1\}^n  \text{ (e) } A(\xx_+ - \xx_-) = \bb,  \text{ (f) } \Pi\g \ge \frac{1}{M}(\xx_+ + \xx_-),  
\text{ and (g) }  \zz \ge \frac{1}{M}(\xx_+ + \xx_-),
\end{split}
\end{equation}
where $\albf = \begin{bmatrix} \alpha_1& \cdots & \alpha_k \end{bmatrix}^\top \in \Re^k$ and $\bebf = [ \cdots ~~  \underbrace{\beta_k  \cdots \beta_k}_{d_k \text{times}} ~~ \cdots]^\top \in \Re^n$. 

Constraints (a)--(d) are used to enforce the aforementioned conditions on the values of the solution. While constraint (e) enforces the condition that the original system of linear equations is satisfied, the constraints (f) and (g) ensure that the group sparsity indicator $\g$ and the entry-wise sparsity indicator $\zz$ are consistent with the entries of $\xx$.

\medskip \mypar{Lagrangial dual.} The Lagrangian function for \eqref{eq:primal} is given as
\begin{equation}
\begin{split}
L(\xx_+,\xx_-,\zz,\g, \llambda_1,\llambda_2,\llambda_3,\llambda_4,\llambda_5)  & = \albf^\top \g + \bebf^\top\zz - \llambda_1^\top\xx_+ - \llambda_2^\top\xx_-  + \llambda_3^\top(\bb-A\xx_+ + A\xx_-)  \\&  + \llambda_4^\top(\frac{1}{M}(\xx_+ + \xx_-) - \Pi\g)  + \llambda_5^\top(\frac{1}{M}(\xx_+ + \xx_-) - \zz) , \\
\end{split}
\end{equation}
where $\llambda_1 \ge \0$, $\llambda_2\ge \0$, $\llambda_4 \ge \0$, and
$\llambda_5 \ge \0$.  In order to obtain the Lagrangian dual function,
we need to minimize $L(\cdot)$ with respect to $\xx_+$, $\xx_-$, $\g$ and $\zz$\cite{BoydS2004}.  Notice that if the coefficients of
$\xx_+$ and $\xx_-$, \ie $\frac{1}{M}(\llambda_4+\llambda_5)-
A^\top\llambda_3 -\llambda_1$ and $\frac{1}{M}(\llambda_4+\llambda_5)+
A^\top\llambda_3 -\llambda_2$ are non-zero, the minimization of $L(\cdot)$ with respect to $\xx_+$ and $\xx_-$  is
unbounded below. To this effect, the constraints that these coefficients are equal to $0$ form constraints on the dual variables.  Next, consider the minimization of $L(\cdot)$ with respect to $\g$.  Since each entry $g_k$ only takes values $0$ or $1$, its optimal value $\hat{g}_k$ that minimizes $L(\cdot)$ is given as
\begin{equation}
\begin{split}
\hat{g}_k = \begin{cases} 0 & \text{if } \alpha_k - (\Pi^\top\llambda_4)_k > 0, \text{ and} \\
1 & \text{ otherwise.} \end{cases}
\end{split}
\end{equation}
A similar expression can be computed for the minimization with respect
to $\zz$. As a consequence, the Lagrangian dual problem can be derived as
\begin{equation}
\label{eq:dual0}
\begin{split}
 \{\llambda^*_i\}_{i=1}^5 & = \underset{ \{\llambda_i\}_{i=1}^5}{\arg\!\max} \big[\llambda_3^\top\bb + \1^\top\min\{\0,\albf-\Pi^\top\llambda_4\} +  \1^\top\min\{\0,\bebf-\llambda_5\}\big], \subjto \\
& \text{(a) } \forall i=1,2,4,5: \llambda_i \ge \0, \text{ (b) } \frac{1}{M}(\llambda_4+\llambda_5)- A^\top\llambda_3 -\llambda_1 = \0 \\ & \text{and (c) } \frac{1}{M}(\llambda_4+\llambda_5)+ A^\top\llambda_3 -\llambda_2 = \0.
\end{split}
\end{equation}

This can be further simplified by rewriting it as the following linear program:
\begin{equation}
\label{eq:dual}
\begin{split}
& \!\!\!\!\{\llambda^*_i\}_{i=3}^7  = \underset{ \{\llambda_i\}_{i=3}^7}{\arg\!\max} \big[\llambda_3^\top\bb + \1^\top\llambda_6 +  \1^\top\llambda_7\big], \subjto~ \text{(a) } \llambda_4 \ge \0,  \text{ (b) } \llambda_5 \ge \0, \text{ (c) } \llambda_6 \le 0, \text{ (d) } \llambda_7 \le 0, \!\!\!\!\!\!\!\!\!\!\!\!\\
&\!\!\!\!\text{ (e) } \llambda_6 \le \albf-\Pi^\top\llambda_4, \text{ (f) } \llambda_7 \le \bebf-\llambda_5 \text{ and (g) }-\frac{1}{M}(\llambda_4+\llambda_5) \le A^\top\llambda_3 \le \frac{1}{M}(\llambda_4+\llambda_5).  
\end{split}
\end{equation}

Notice that we have made two changes in going from \eqref{eq:dual0} to \eqref{eq:dual}. First, we have replaced constraints (b) and (c) in \eqref{eq:dual0} with the constraint (g) in \eqref{eq:dual} and eliminated $\llambda_1$ and $\llambda_2$ from \eqref{eq:dual}. Second, we have introduced variables $\llambda_6$ and $\llambda_7$ to encode the ``min" operator in the objective function of \eqref{eq:dual0}. 

\mypar{Lagrangian bidual.} We will now consider the Lagrangian dual of \eqref{eq:dual}, which will be referred to as the \emph{Lagrangian bidual} of \eqref{eq:primal}. It can be verified that the Lagrangian dual of \eqref{eq:dual} is given as
\begin{equation}
\label{eq:bidual0}
\begin{split}
& \!\!\!\!\!\! \{\xx_+^*,\xx_-^*,\zz^*,\g^*\}  \!=\!\!   \underset{\{\xx_+,\xx_-,\zz,\g\}}{\arg\!\min}  \albf^\top\g + \bebf^\top\zz
\quad \subjto  \text{ (a) } \xx_+ \ge 0, \text{ (b) } \xx_- \ge 0, \text{ (c) } \g \in [0,1]^K, \!\!\\
& \!\!\!\!\!\! \text{(d) } \zz \in [0,1]^n  \text{ (e) } A(\xx_+ - \xx_-) = \bb,  \text{ (f) } \Pi\g \ge \frac{1}{M}(\xx_+ + \xx_-)  
\text{ and (g) }  \zz \ge \frac{1}{M}(\xx_+ + \xx_-).
\end{split}
\end{equation}

Notice that in going from \eqref{eq:primal} to \eqref{eq:bidual0}, the discrete valued variables $\zz$ and $\g$ have been relaxed to take real values between $0$ and $1$. Given that $\zz \le \1$ and noting that $\xx$ can be represented as $\xx =\xx_+ - \xx_-$, we can conclude from constraint (g) in \eqref{eq:bidual0} that the solution $\xx^*$ satisfies $\|\xx^*\|_\infty \le M$. Moreover, given that $\g$ and $\zz$ are relaxed to take real values, we see that the optimal values for $g^*_k$ and $z^*_i$  are $\frac{1}{M}\|\xx^*_k\|_\infty$ and $\frac{1}{M}|x^*_i|$, respectively. Hence, we can eliminate constraints (f) and (g) by replacing $\zz$ and $\g$ by these optimal values. It can then be verified that solving \eqref{eq:bidual0} is equivalent to solving the problem:
\begin{equation}
\label{eq:bidual}
\xx_\text{bidual}^* =  \underset{\xx}{\arg\!\min}  \frac{1}{M} \sum_{k=1}^K \big[\alpha_k \|\xx_k\|_\infty + \beta_k \| \xx_k \|_1 \big] 
\quad \subjto  \text{ (a) }  A\xx = \bb  \text{ and (b) } \|\xx\|_\infty \le M.
\end{equation}

This is the Lagrangian bidual for \eqref{eq:primal}.

\section{Theoretical results from the biduality framework}
\label{sec:results}
In this section, we first describe some properties of the biduality framework in general. We will then focus on some important results for the special cases of entry-wise sparsity and group sparsity.

\begin{theorem}
The optimal value of the Lagrangian bidual in \eqref{eq:bidual} is a lower bound on the optimal value of the NP-hard primal problem in \eqref{eq:primal}.
\label{thm:optval}
\end{theorem}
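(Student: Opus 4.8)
The plan is to establish the bound as a two-link chain: first weak Lagrangian duality from the primal \eqref{eq:primal} down to its Lagrangian dual \eqref{eq:dual}, and then linear-programming strong duality between \eqref{eq:dual} and its own Lagrangian dual \eqref{eq:bidual0}, which the previous section identifies with \eqref{eq:bidual}. Write $p^*$, $d^*$, $b^*$ for the optimal values of \eqref{eq:primal}, of the Lagrangian dual \eqref{eq:dual0} (equivalently the linear program \eqref{eq:dual}), and of the Lagrangian bidual \eqref{eq:bidual0} (equivalently \eqref{eq:bidual}), respectively. The goal is exactly $b^* \le p^*$.

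First I would invoke weak duality: for \emph{any} minimization problem, convex or not, the quantity obtained by maximizing over the multipliers the infimum over the primal variables of the Lagrangian is a lower bound on the primal optimum, since for every primal-feasible point the inner maximization over the sign-constrained multipliers recovers the original objective. Because \eqref{eq:dual0} is, by its derivation in the preceding section, precisely the Lagrangian dual of \eqref{eq:primal}, this immediately gives $d^* \le p^*$; crucially, the integrality of $\g$ and $\zz$ and the non-convexity of \eqref{eq:primal} are irrelevant to this step. Next I would use that \eqref{eq:dual} is a linear program with \eqref{eq:bidual0} as its Lagrangian dual: \eqref{eq:dual} is feasible — e.g. $\llambda_3 = \0$ and $\llambda_4 = \llambda_5 = \llambda_6 = \llambda_7 = \0$ satisfy constraints (a)--(g) because $\albf \ge \0$ and $\bebf \ge \0$ — and its value is finite by the previous step (and by feasibility of $A\xx = \bb$), so LP strong duality gives no duality gap, i.e. $b^* = d^*$. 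Chaining the two links yields $b^* = d^* \le p^*$, and since the optimal value of \eqref{eq:bidual} equals $b^*$ by the equivalence already established, this is the claim.

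I do not expect a genuine obstacle; the only points needing care are (i) invoking weak duality in the correct direction and recalling that it requires no convexity, and (ii) certifying the zero duality gap for the pair \eqref{eq:dual}$\leftrightarrow$\eqref{eq:bidual0}, for which the observation that \eqref{eq:dual} is an LP does all the work. As an independent cross-check I would also record the more elementary argument: \eqref{eq:bidual0} is literally \eqref{eq:primal} with the integrality constraints $\g \in \{0,1\}^K$, $\zz \in \{0,1\}^n$ relaxed to the box constraints $\g \in [0,1]^K$, $\zz \in [0,1]^n$ and every other constraint and the objective unchanged, so enlarging the feasible region of a minimization cannot raise its optimum, giving $b^* \le p^*$ directly. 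Throughout I would lean on the problem equivalences already asserted in the previous section — \eqref{eq:primal0}$\leftrightarrow$\eqref{eq:primal1}$\leftrightarrow$\eqref{eq:primal}, \eqref{eq:dual0}$\leftrightarrow$\eqref{eq:dual}, and \eqref{eq:bidual0}$\leftrightarrow$\eqref{eq:bidual} — so that the chain transfers verbatim to the problems as written in the theorem.
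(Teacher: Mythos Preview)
Your proposal is correct and follows essentially the same two-step argument as the paper: weak duality gives $d^* \le p^*$ for the non-convex primal, and LP strong duality gives $b^* = d^*$ for the linear program \eqref{eq:dual} and its dual \eqref{eq:bidual0}. Your added feasibility check for \eqref{eq:dual} and the independent relaxation-based cross-check are nice touches that the paper omits, but the core reasoning is the same.
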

\begin{proof}
Since there is no duality gap between a linear program and its Lagrangian dual \cite{BoydS2004}, the optimal values of  the Lagrangian dual in \eqref{eq:dual}  and the Lagrangian bidual in \eqref{eq:bidual} are the same. Moreover, we know that the optimal value of a primal minimization problem is always bounded below by the optimal value of its Lagrangian dual \cite{BoydS2004}. We hence have the required result.
\end{proof}

\begin{remark} 
Since the original primal problem in \eqref{eq:primal} is NP-hard, we note that the duality gap between the primal  and its dual in \eqref{eq:dual} is non-zero in general. Moreover, we notice that as we increase $M$ (\ie a  more conservative estimate),  the optimal value of the primal is unchanged, but the optimal value of the bidual in \eqref{eq:bidual} decreases.  Hence, the duality gap increases as $M$ increases.
\end{remark}


$M$ in \eqref{eq:primal1} should preferably be equal to $\|\xx^*_\text{primal}\|_\infty$, which may not be possible to estimate accurately in practice. Therefore, it is of interest to analyze the effect of taking a very conservative estimate of $M$, \ie choosing a large value for $M$.  In what follows, we show that taking a conservative estimate of $M$ is equivalent to dropping the box constraint in the bidual. 

For this purpose, consider the following modification of the bidual:
\begin{equation}
\label{eq:bidual-unconst}
\xx_\text{bidual-conservative}^* =  \underset{\xx}{\arg\!\min} \sum_{k=1}^K \big[\alpha_k \|\xx_k\|_\infty + \beta_k \| \xx_k \|_1 \big]  \quad \subjto \quad  A\xx = \bb,
\end{equation}
where we have essentially dropped the box constraint (b) in \eqref{eq:bidual}. It is easy to verify that $\forall M \ge \max\{\|\xx_\text{primal}^*\|_\infty,\|\xx_\text{bidual-conservative}^*\|_\infty\}$,  we have that $\xx_\text{bidual}^* = \xx_\text{bidual-conservative}^*$. Therefore, we see that taking a conservative value of $M$ is equivalent to solving the modified bidual in \eqref{eq:bidual-unconst}.

\subsection{Results for entry-wise sparsity minimization} 
\label{sec:entry-results}
Notice that by substituting $\alpha_1= \cdots = \alpha_K =0$ and $\beta_1 = \cdots = \beta_K = 1$, the optimization problem in \eqref{eq:primal0}  reduces to the entry-wise sparsity minimization problem in \eqref{eq:entry}. Hence, the Lagrangian bidual to the $M$-regularized entry-wise sparsity problem $(P_0)$ is:
\begin{equation}
\label{eq:entry-bidual}
\xx_\text{entry-wise-bidual}^* =  \underset{\xx}{\arg\!\min}  \frac{1}{M}  \| \xx \|_1
\quad \subjto  \text{ (a) }  A\xx = \bb  \text{ and (b) } \|\xx\|_\infty \le M.
\end{equation}

More importantly, we can also conclude from \eqref{eq:bidual-unconst} that solving the Lagrangian bidual to the entry-wise sparsity problem with a conservative estimate of $M$ is equivalent to solving the problem:
\begin{equation}
\label{eq:entry-bidual-unconst}
\xx_\text{entry-wise-bidual-conservative}^* =  \underset{\xx}{\arg\!\min}   \| \xx \|_1
\quad \subjto \quad  A\xx = \bb,
\end{equation}
which is precisely the well-known $\ell_1$-norm relaxation for $(P_0)$. 
Our framework therefore provides a new interpretation for this relaxation:
\begin{remark}
The $\ell_1$-norm minimization problem in \eqref{eq:entry-bidual-unconst} is the Lagrangian bidual of the $\ell_0$-norm minimization problem in \eqref{eq:entry}, and solving \eqref{eq:entry-bidual-unconst} is equivalent to maximizing the dual of \eqref{eq:entry}. 
\end{remark} 
We further note that we can now use the solution of \eqref{eq:entry-bidual} to derive a non-trivial lower bound for the primal objective function which is precisely the sparsity of the desired solution. More specifically, we can use Theorem \ref{thm:optval} to conclude the following result:

\begin{corollary} \label{corr:l0l1} Let $\xx_0^*$ be the solution to \eqref{eq:entry}. We have that $\forall M \ge \|\xx_0^*\|_\infty$, the sparsity of $\xx_0^*$, \ie $\|\xx_0^*\|_0$ is bounded below by $ \frac{1}{M}  \| \xx_\text{entry-wise-bidual}^* \|_1 $.
\end{corollary}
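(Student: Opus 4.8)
The plan is to obtain the bound as an immediate specialization of Theorem~\ref{thm:optval} to the entry-wise setting, after checking that the $M$-regularization leaves the optimal value of the $\ell_0$ problem unchanged.

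First I would set $\alpha_1 = \cdots = \alpha_K = 0$ and $\beta_1 = \cdots = \beta_K = 1$ in the general development of Section~\ref{sec:results}. Under this substitution the objective $\sum_k\big[\alpha_k \sign(\|\xx_k\|_p>0) + \beta_k\|\xx_k\|_0\big]$ collapses to $\|\xx\|_0$, so the $M$-regularized primal \eqref{eq:primal1} becomes $\min\{\|\xx\|_0 : A\xx = \bb,\ \|\xx\|_\infty \le M\}$, which by the reformulation argument of the paper has the same optimal value as the equivalent problem \eqref{eq:primal}. On the dual side, the same substitution turns the bidual \eqref{eq:bidual} into \eqref{eq:entry-bidual}, whose optimal value is $\tfrac{1}{M}\|\xx_\text{entry-wise-bidual}^*\|_1$ by definition of $\xx_\text{entry-wise-bidual}^*$.

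Next I would argue that, whenever $M \ge \|\xx_0^*\|_\infty$, the optimal value of the box-constrained $\ell_0$ problem equals $\|\xx_0^*\|_0$. On one hand, $\xx_0^*$ is feasible for \eqref{eq:entry}, satisfies $\|\xx_0^*\|_\infty \le M$ by hypothesis, and hence is feasible for the box-constrained problem, giving an upper bound of $\|\xx_0^*\|_0$ on its optimal value. On the other hand, every point feasible for the box-constrained problem is feasible for \eqref{eq:entry}, so the box-constrained optimal value is at least $\|\xx_0^*\|_0$; the two bounds coincide. Applying Theorem~\ref{thm:optval}, which states that the optimal value of \eqref{eq:bidual} lower-bounds that of \eqref{eq:primal}, to this specialization then yields $\tfrac{1}{M}\|\xx_\text{entry-wise-bidual}^*\|_1 \le \|\xx_0^*\|_0$, which is the claim.

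I expect the only real subtlety to be the handling of non-uniqueness: if \eqref{eq:entry} admits several minimizers, one must fix a particular $\xx_0^*$, take $M \ge \|\xx_0^*\|_\infty$ for that choice, and note that $\|\xx_0^*\|_0$ still equals the (choice-independent) optimal value of \eqref{eq:entry}, so the feasibility comparison above goes through verbatim. Everything else is a direct bookkeeping substitution into the already-established duality chain, so no new estimates are required.
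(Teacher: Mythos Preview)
Your proposal is correct and follows essentially the same approach as the paper: specialize the general framework to $\alpha_k=0$, $\beta_k=1$, and invoke Theorem~\ref{thm:optval}. The paper states the corollary as an immediate consequence of Theorem~\ref{thm:optval} without spelling out the intermediate steps; your additional argument that the box constraint $\|\xx\|_\infty \le M$ leaves the optimal value of $(P_0)$ unchanged when $M \ge \|\xx_0^*\|_\infty$ simply makes explicit what the paper already asserted in setting up \eqref{eq:primal1}.
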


Due to the non-zero duality gap in the primal entry-wise sparsity minimization problem, the above lower bound provided by Corollary \ref{corr:l0l1} is not tight in general. 



\subsection{Results for group sparsity minimization} 
\label{sec:group-results}
Notice that by substituting $\alpha_1= \cdots = \alpha_K =1$ and
$\beta_1 = \cdots = \beta_K = 0$, the optimization problem in
\eqref{eq:primal0}  reduces to the group sparsity minimization problem
in \eqref{eq:group}. Hence, the Lagrangian bidual of the group sparsity problem is:
\begin{equation}
\label{eq:group-bidual}
\xx_\text{group-bidual}^* =  \underset{\xx}{\arg\!\min}  \frac{1}{M} \sum_{k=1}^K \|\xx_k\|_\infty
\quad \subjto \quad \text{ (a) }  A\xx = \bb  \text{ and (b) } \|\xx\|_\infty \le M.
\end{equation}

As in the case of entry-wise sparsity above, solving the bidual to the group sparsity problem with a conservative estimate of $M$ is equivalent to solving:
\begin{equation}
\label{eq:group-bidual-unconst}
\xx_\text{group-bidual-conservative}^* =  \underset{\xx}{\arg\!\min}  \sum_{k=1}^K \|\xx_k\|_\infty
\quad \subjto \quad  A\xx = \bb,
\end{equation}
which is the convex $\ell_{1,\infty}$-norm relaxation of the $\ell_{0,p}$-min problem \eqref{eq:group}. 
In other words, the biduality framework selects the $\ell_{1,\infty}$-norm out of the entire family of  $\ell_{1,p}$-norms as the convex surrogate of the $\ell_{0,p}$-norm.


Finally, we use Theorem \ref{thm:optval} to show that the solution obtained by minimizing the $\ell_{1,\infty}$-norm provides a lower bound for the group sparsity. 


\begin{corollary} \label{corr:grp} Let $\xx^*_{0,p}$ be the solution to \eqref{eq:group}. For any $M \ge \|\xx^*_{0,p}\|_\infty$, the group sparsity of $\xx^*_{0,p}$, \ie $\ell_{0,p}(\xx^*_{0,p}) $, is bounded below by $ \frac{1}{M} \ell_{1,\infty}(\xx_\text{group-bidual}^*) $.
\end{corollary}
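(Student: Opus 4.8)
The plan is to reduce Corollary~\ref{corr:grp} to Theorem~\ref{thm:optval} by specializing the general mixed-sparsity setup to the pure group-sparsity case, exactly as was done in the excerpt for the entry-wise case in Corollary~\ref{corr:l0l1}. First I would note that the substitution $\alpha_1=\cdots=\alpha_K=1$ and $\beta_1=\cdots=\beta_K=0$ turns the generic primal \eqref{eq:primal0} into the group-sparsity problem \eqref{eq:group}, and (for any admissible $M$, i.e. $M \ge \|\xx^*_{0,p}\|_\infty$) turns its $M$-regularized form \eqref{eq:primal1} into an equivalent problem with the \emph{same} optimal value, namely $\ell_{0,p}(\xx^*_{0,p})$; this is the observation already used to arrive at \eqref{eq:group-bidual}. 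Under the same substitution the Lagrangian bidual \eqref{eq:bidual} becomes \eqref{eq:group-bidual}, whose optimal value is $\frac{1}{M}\ell_{1,\infty}(\xx_\text{group-bidual}^*)$, since the objective $\frac{1}{M}\sum_{k=1}^K\|\xx_k\|_\infty$ is by definition $\frac{1}{M}\ell_{1,\infty}(\xx)$.

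The second step is to invoke Theorem~\ref{thm:optval} directly: the optimal value of the Lagrangian bidual in \eqref{eq:bidual} is a lower bound on the optimal value of the primal \eqref{eq:primal} (equivalently \eqref{eq:primal1}). Chaining the two identifications from the previous paragraph, this reads $\frac{1}{M}\ell_{1,\infty}(\xx_\text{group-bidual}^*) \le \ell_{0,p}(\xx^*_{0,p})$, which is exactly the claimed bound. The only thing to be careful about is the role of $M$: the hypothesis $M \ge \|\xx^*_{0,p}\|_\infty$ is precisely what guarantees that the box constraint $\|\xx\|_\infty \le M$ in \eqref{eq:primal1} does not exclude the true minimizer, so that \eqref{eq:primal1} and \eqref{eq:group} share the same optimal value; without it the reduction would not be valid, and indeed the Remark after Theorem~\ref{thm:optval} shows that a smaller $M$ would change (and actually tighten) the bidual rather than the primal.

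I do not anticipate a genuine obstacle here — the argument is essentially a transcription of Corollary~\ref{corr:l0l1} with the roles of $\albf$ and $\bebf$ swapped. The one point deserving a sentence of care, and thus the ``hardest'' part, is verifying that the value $\ell_{0,p}(\xx^*_{0,p})$ is independent of $p$ for the purpose of this statement: the primal objective $\sum_k \sign(\|\xx_k\|_p>0)$ counts the number of nonzero blocks, and $\|\xx_k\|_p>0 \iff \xx_k \neq \0$ for every $p$, so the count — hence the optimal value and the optimal support — does not depend on $p$, even though the bidual surrogate that emerges is specifically the $\ell_{1,\infty}$-norm. With that noted, the chain of (in)equalities above gives the result.
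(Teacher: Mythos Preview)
Your proposal is correct and takes essentially the same approach as the paper: specialize the general framework to the group-sparsity case via $\alpha_k=1$, $\beta_k=0$, use the hypothesis $M \ge \|\xx^*_{0,p}\|_\infty$ to ensure the box-constrained primal \eqref{eq:primal1} has the same optimal value as \eqref{eq:group}, and then invoke Theorem~\ref{thm:optval}. The paper in fact offers no separate proof beyond the sentence ``we use Theorem~\ref{thm:optval},'' so your write-up is, if anything, more detailed than the original --- your remark on the $p$-independence of the primal count is a helpful clarification the paper leaves implicit.
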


The $\ell_{1,\infty}$-norm seems to be an interesting choice for computing the lower bound of the group sparsity, as compared to other $\ell_{1,p}$-norms for finite $p < \infty$. For example, consider the case when $p=1$, where the $\ell_{1,p}$-norm is equivalent to the $\ell_1$-norm. Assume that $A$ consists of a single block with several columns so that the maximum number of non-zero blocks is $1$. Denote the solution to the $\ell_1$-minimization problem as $\xx_1^*$. It is possible to construct examples  (also see Figure \ref{fig:entry}) where $\frac{1}{M}\ell_{1,1}(\xx_1^*)  = \frac{1}{M}\ell_{1}(\xx_1^*) > 1 $. Hence, it is  unclear in general if the solutions obtained by minimizing $\ell_{1,p}$-norms for finite-valued $p < \infty$ can help provide lower bounds for the group sparsity.


\section{Experiments}
\label{sec:exp}
We now present experiments to evaluate the bidual framework for minimizing entry-wise sparsity  and mixed sparsity. We present experiments on synthetic data to show that our framework can be used to compute non-trivial lower bounds for the entry-wise sparsity minimization problem. We then consider the face recognition problem where we compare the performance of the bidual-based $\ell_{1,\infty}$-norm relaxation  with that of  the $\ell_{1,2}$-norm relaxation for mixed sparsity minimization. 

We use boxplots to provide a concise representation of our results' statistics. The top and bottom edge of a boxplot for a set of values indicates the maximum and minimum of the values. The bottom and top extents of the box indicate the $25$ and $75$ percentile mark. The red mark in the box indicates the median and the red crosses outside the boxes indicate potential outliers.

\mypar{Entry-wise sparsity.} We now explore the practical implications of Corollary~\ref{corr:l0l1} through synthetic experiments. We randomly generate entries of $A\in \Re^{128 \times 256}$ and $\xx_0\in\Re^{256}$ from a Gaussian distribution with unit variance. The sparsity of $\xx_0$ is varied from $1$ to $64$ in steps of $3$. We solve \eqref{eq:entry-bidual} with $\bb=A\xx_0$ using  $M=M_0,  2M_0$ and $5M_0$, where $M_0=\|\xx_0\|_\infty$. We use Corollary \ref{corr:l0l1} to compute lower bounds on the true sparsity, \ie $\|\xx_0\|_0$. We repeat this experiment $1000$ times for each sparsity level and Figure~\ref{fig:entry} shows the boxplots for the bounds computed from these experiments.

We first analyze the lower bounds computed when $M=M_0$, in Figure \ref{fig:m0}. As explained  in Section \ref{sec:entry-results}, the bounds are not expected to be tight due to the duality gap. Notice that for extremely sparse solutions, the maximum of the computed bounds is close to the true sparsity but this diverges as the sparsity of $\xx_0$ reduces. The median value of the bounds is much looser and we see that the median also diverges as the sparsity of $\xx_0$ reduces. Furthermore, the computed lower bounds seem to grow linearly as a function of the true sparsity. Similar trends are observed for $M=2M_0$ and $5M_0$ in Figures \ref{fig:2m0} and \ref{fig:5m0}, respectively. As expected from the discussion in Section \ref{sec:entry-results}, the bounds become very loose as $M$ increases.


In theory, we would like to have \emph{per-instance certificates-of-optimality} of the computed solution, where the lower bound is equal to the true sparsity $\|\xx_0\|_0$.  Nonetheless, we note that this ability to compute a per-instance non-trivial lower bound on the sparsity of the desired solution is an important step forward with respect to the previous approaches that require pre-computing optimality conditions for equivalence of solutions to the $\ell_0$-norm and $\ell_1$-norm minimization problems. 

We have performed a similar experiment for the group sparsity case, and observed that  the bidual framework is able to provide non-trivial lower bounds for the group sparsity also.

\begin{figure}[!t]
 \centering
\subfigure[Boxplot for $M=M_0$]{\includegraphics[width=0.32\columnwidth]{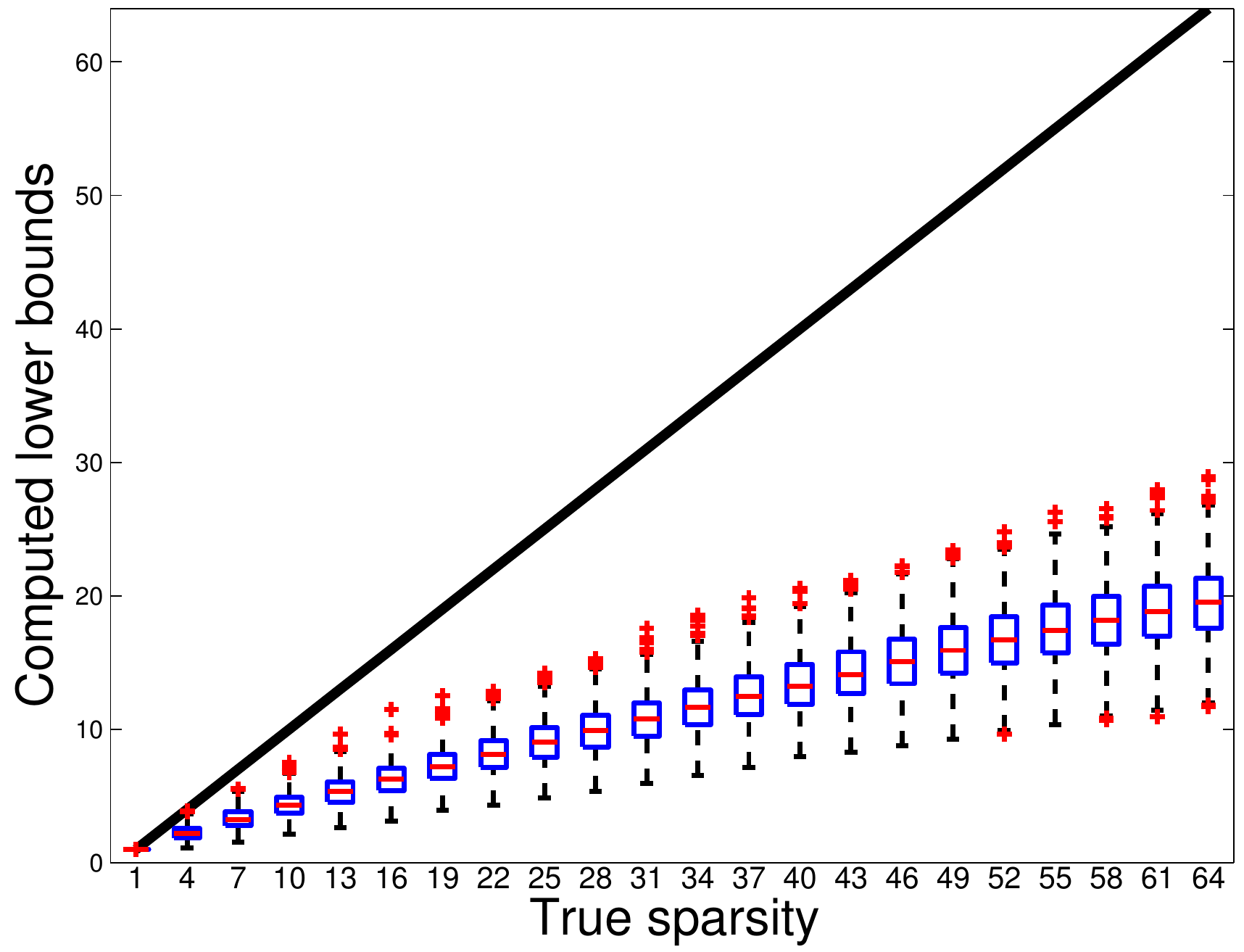} \label{fig:m0}}
\subfigure[Boxplot for $M=2M_0$]{\includegraphics[width=0.32\columnwidth]{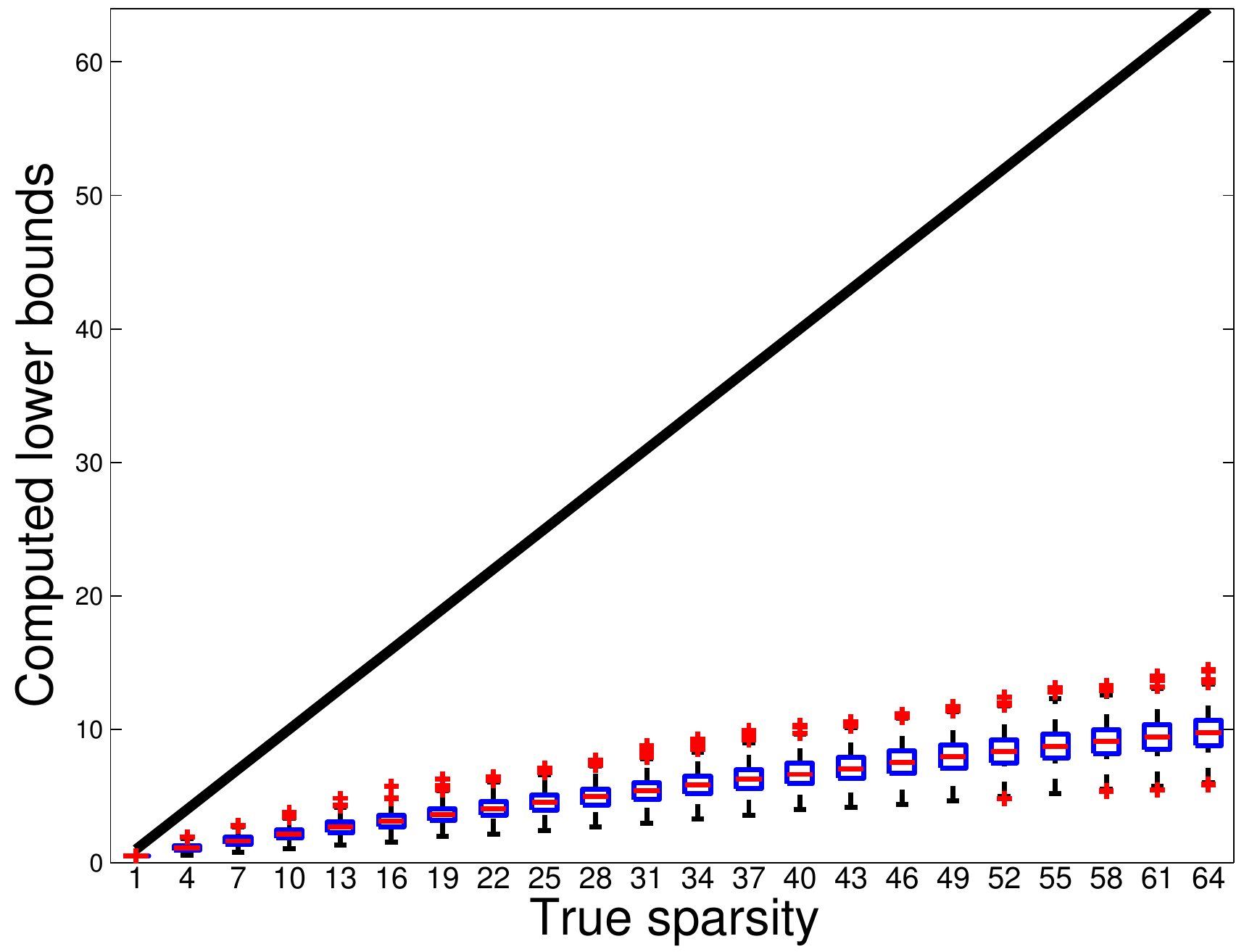} \label{fig:2m0}}
\subfigure[Boxplot for $M=5M_0$]{\includegraphics[width=0.32\columnwidth]{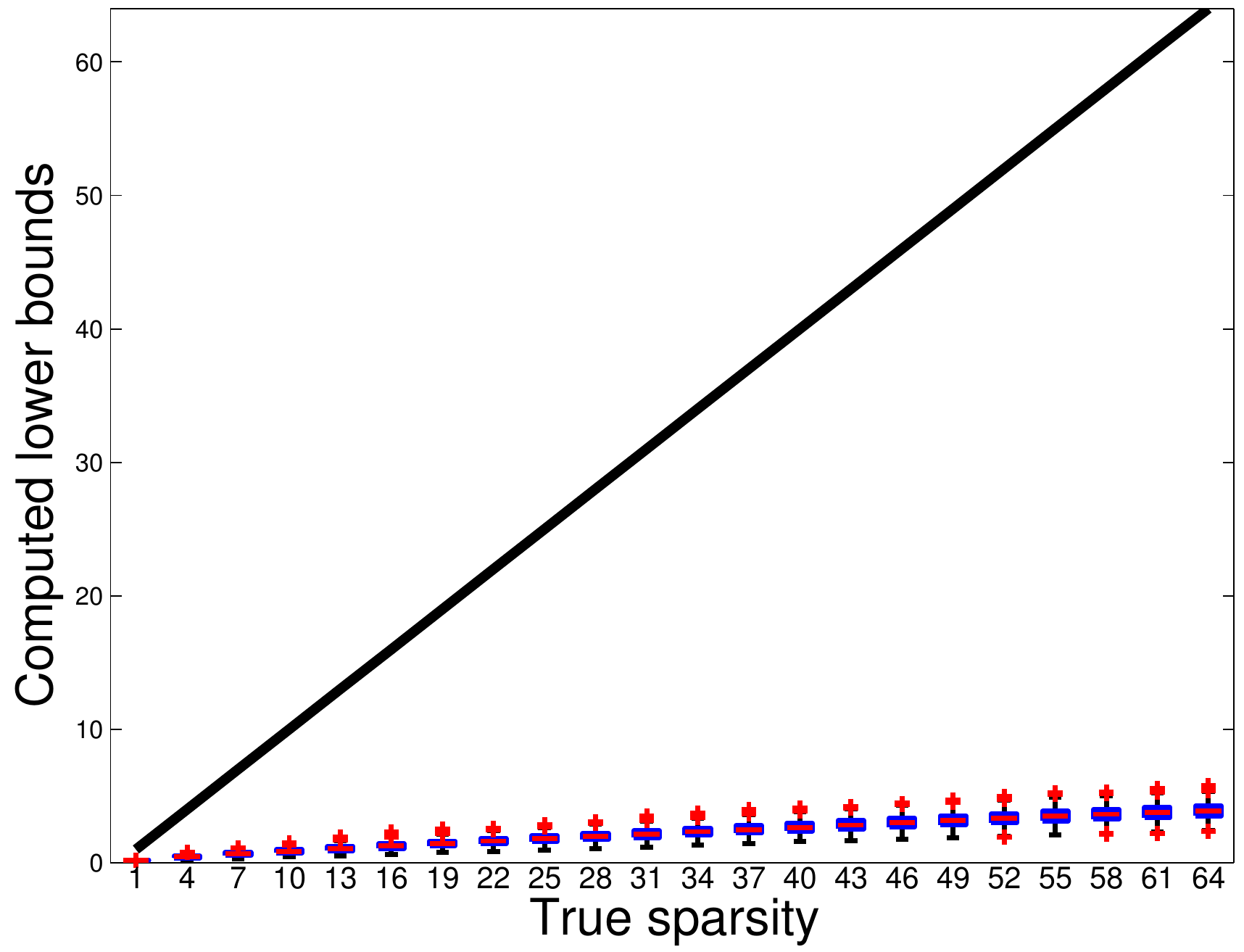} \label{fig:5m0}}\\
\caption{Results for computing the lower bounds on the true (black lines) entry-wise sparsity $\|\xx_0\|_0$  obtained over 1000 trials. 
The bounds are computed by solving \eqref{eq:entry-bidual} and using Corollary \ref{corr:l0l1} with $M=M_0, 2M_0$ and  $5M_0$, where $M_0=\|\xx_0\|_\infty$. Notice that as expected from the discussion in Section \ref{sec:entry-results}, the bounds are not tight due to the duality gap and become looser as $M$ increases.  }
  \label{fig:entry}
\end{figure}

\mypar{Mixed sparsity.} We now evaluate the results of mixed sparsity minimization for the sparsity-based face recognition problem,  where the columns of $A$ represent training images from the $K$ face classes: $A_1, \cdots, A_{K}$ and $\bb \in \Re^m$ represents a query image. We assume that a subset of pixel values in the query image may be corrupted or disguised. Hence, the error in the image space is modeled by a sparse error term $\ee$: $\bb = \bb_0 + \ee$, where $\bb_0$ is the uncorrupted image. A linear representation of the query image forms the following linear system of equations:
\begin{equation}
\bb = A\xx + \ee = \begin{bmatrix}A_1 & \cdots & A_K & I\end{bmatrix} \begin{bmatrix}\xx^\top_1& \cdots & \xx^\top_K & \ee^\top \end{bmatrix}^\top,
\end{equation}
where $I$ is the $m \times m$ identity matrix. The goal of sparsity-based classification (SBC) is to minimize the group sparsity in $\xx$ and the sparsity of  $\ee$ such that the dominant non-zero coefficients in $\xx$ reveal the membership of the ground-truth observation $\bb_0 = \bb - \ee$ \cite{WrightJ2009, ElhamifarE2011}. In our experiments, we solve for $\xx$ and $\ee$ by solving the following optimization problem:
\begin{equation}
\label{eq:mixed-res}
\{\xx_\text{1,p}^*,\ee^*_1\} =  \underset{\{\xx,\ee\}}{\arg\!\min} \sum_{k=1}^K  \|\xx_k\|_p + \gamma \|\ee\|_1  \quad \subjto \quad  A\xx + \ee= \bb.
\end{equation}

Notice that for $p=\infty$, this reduces to solving a special case of the problem in \eqref{eq:bidual-unconst}, \ie the bidual relaxation of the mixed sparsity problem with a conservative estimate of $M$. In our experiments, we set $\gamma = 0.01$ and compare the solutions to \eqref{eq:mixed-res} obtained using $p=2$ and $p=\infty$.

We evaluate the algorithms on a subset of the AR dataset \cite{AR} which has manually aligned frontal face images of size $83 \times 60$ for 50 male and 50 female subjects, \ie $K = 100$ and $m=4980$. Each individual contributes 7 un-occluded training images, 7 un-occluded testing images and 12 occluded testing images. Hence, we have 700 training images and 1900 testing images. To compute the number of non-zero blocks in the coefficient $\xx$ estimated for a testing image, we find the number of blocks whose energy $\ell_2(\xx_k)$ is greater than a specified threshold.

The results of our experiments are presented  in Figure \ref{fig:mixed}. The solution obtained with $p=2$ gives better group sparsity of $\xx$. However,  a sparser error $\ee$ is estimated with $p=\infty$.   
The number of non-zero entities in a solution to \eqref{eq:mixed-res}, \ie the number of non-zero blocks plus the number of non-zero error entries,  is lower for the solution obtained using $p = \infty$ rather than that obtained using $p=2$. However, the primal mixed-sparsity objective value  $\ell_{0,p}(\xx) + \gamma \|\ee\|_0$ (see \eqref{eq:mixed}) is lower for the solution obtained using $p=2$.


\begin{figure}[!t]
 \centering
\subfigure[Group sparsity for $p=2$]{\includegraphics[width=0.31\columnwidth]{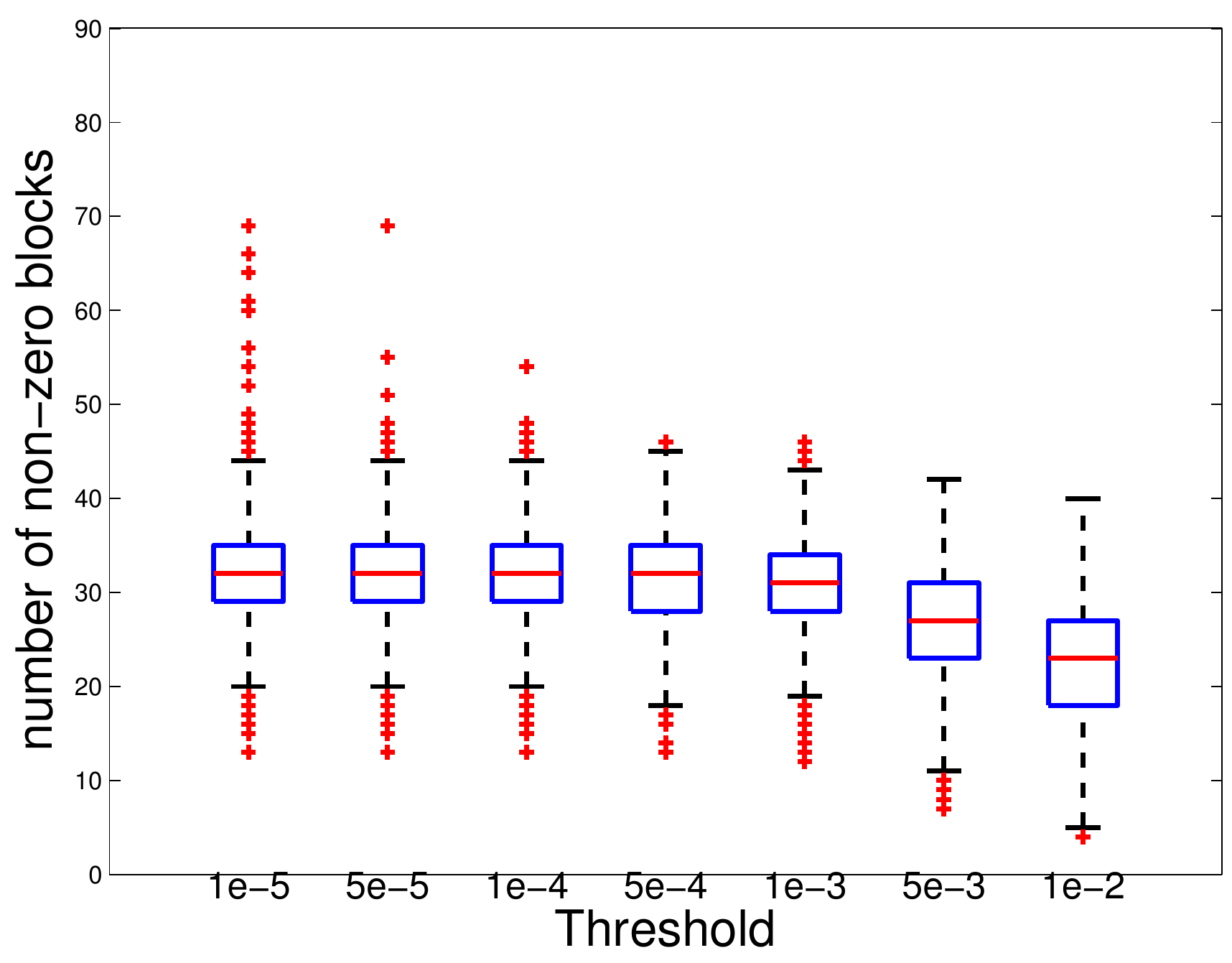}}
\subfigure[Group sparsity for $p=\infty$]{\includegraphics[width=0.31\columnwidth]{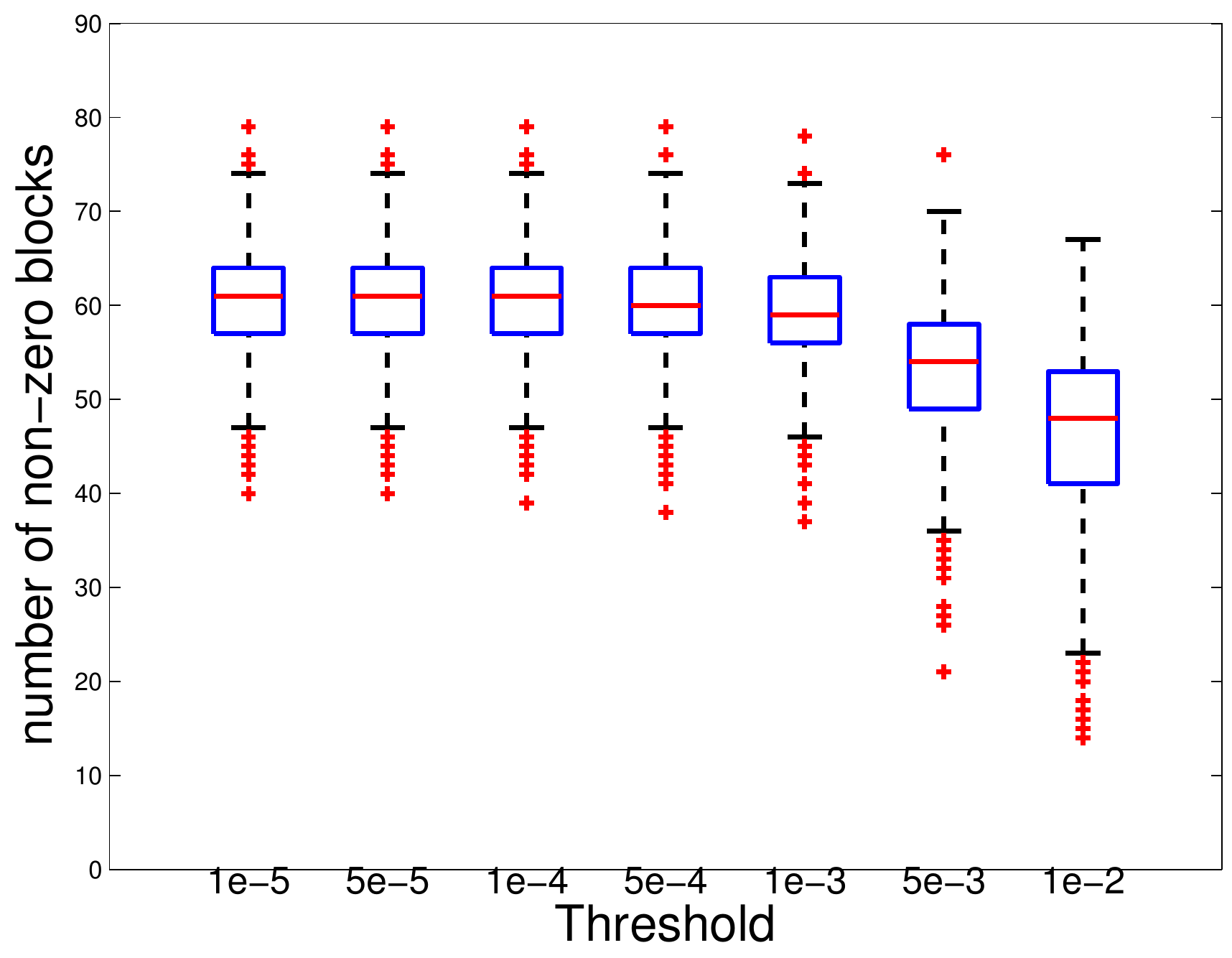} }
\subfigure[Difference in group sparsities]{\includegraphics[width=0.31\columnwidth]{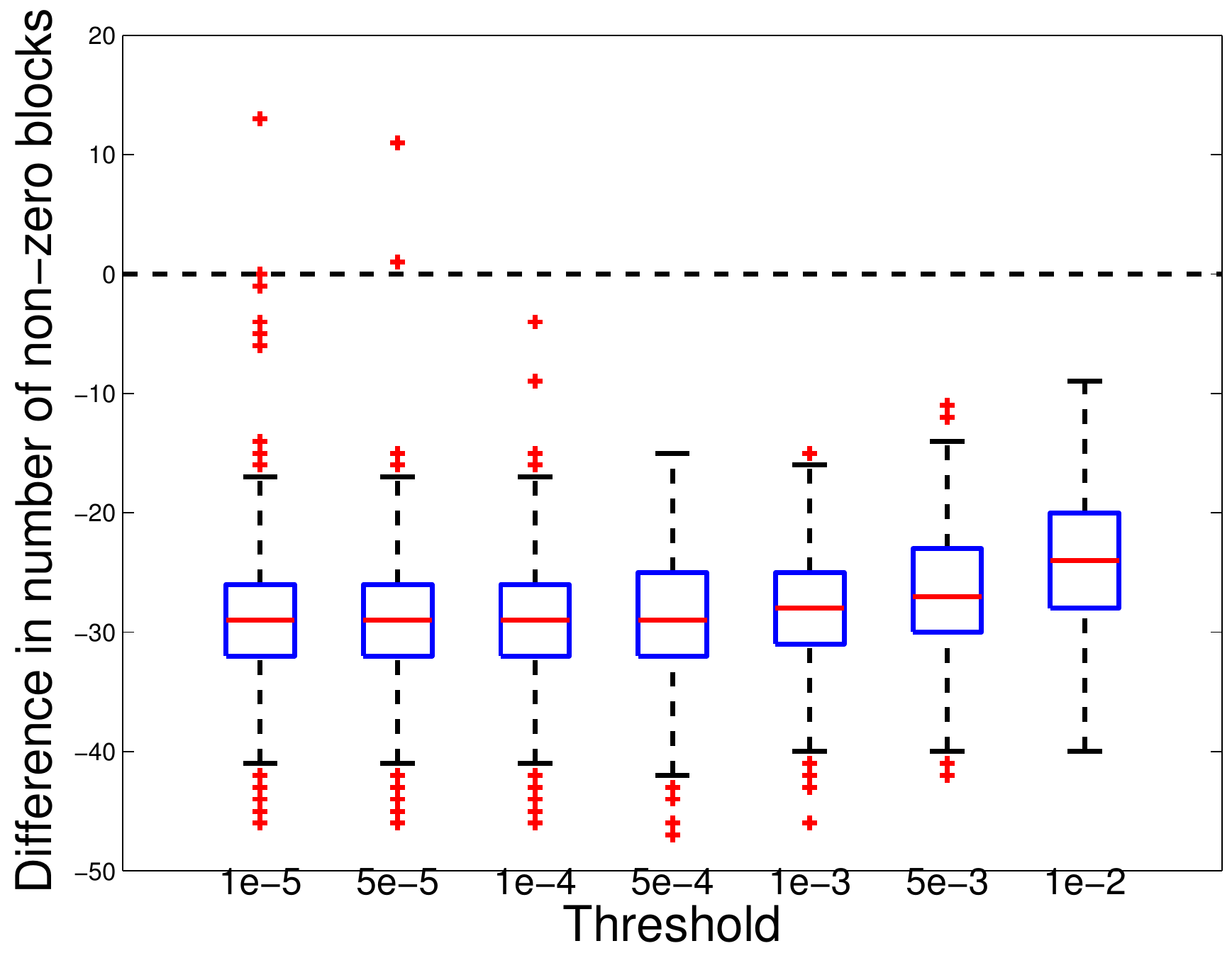} \label{fig:mixed-diff-x} }\\
\subfigure[Entry-wise sparsity for $p=2$]{\includegraphics[width=0.31\columnwidth]{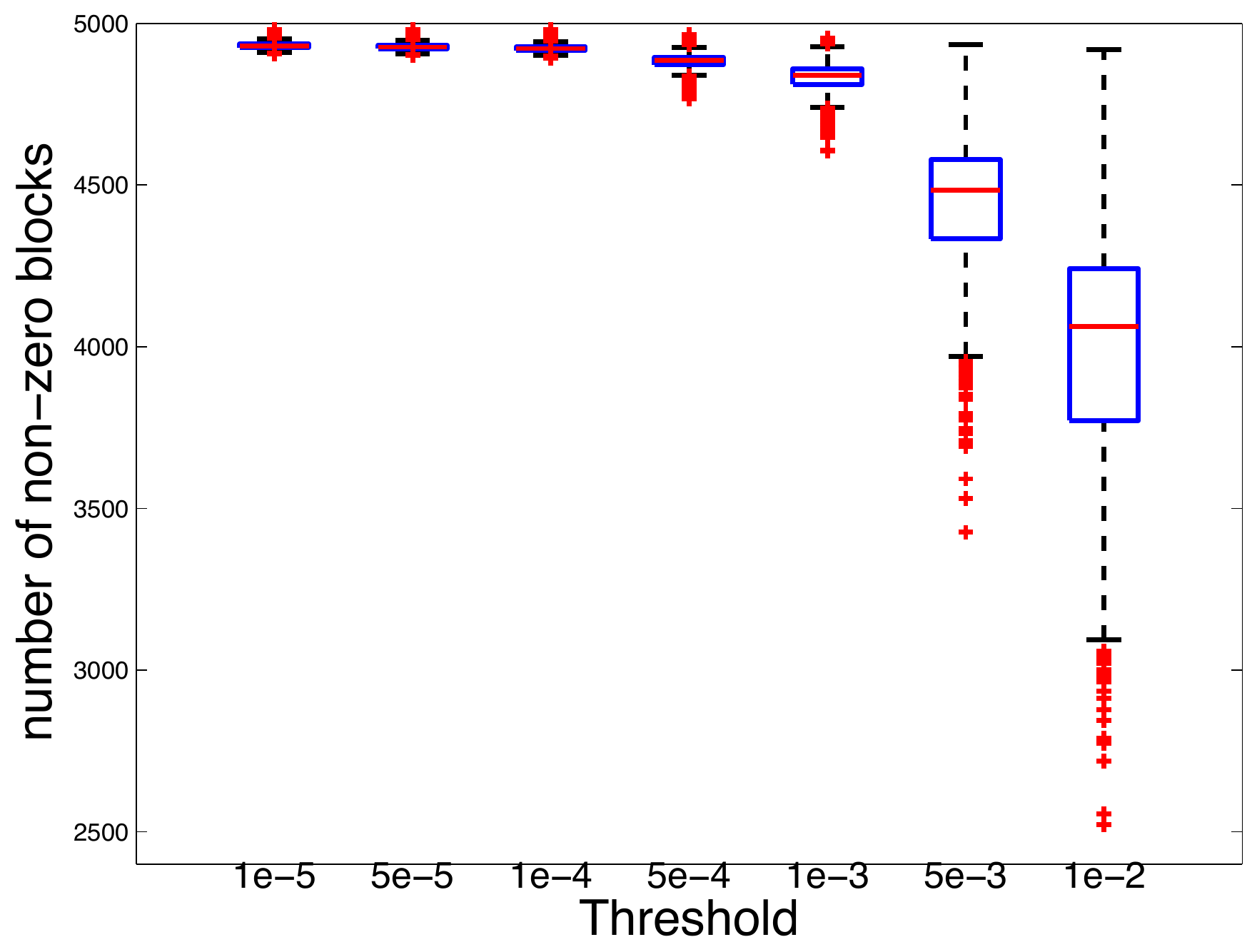}}
\subfigure[Entry-wise sparsity for $p=\infty$]{\includegraphics[width=0.31\columnwidth]{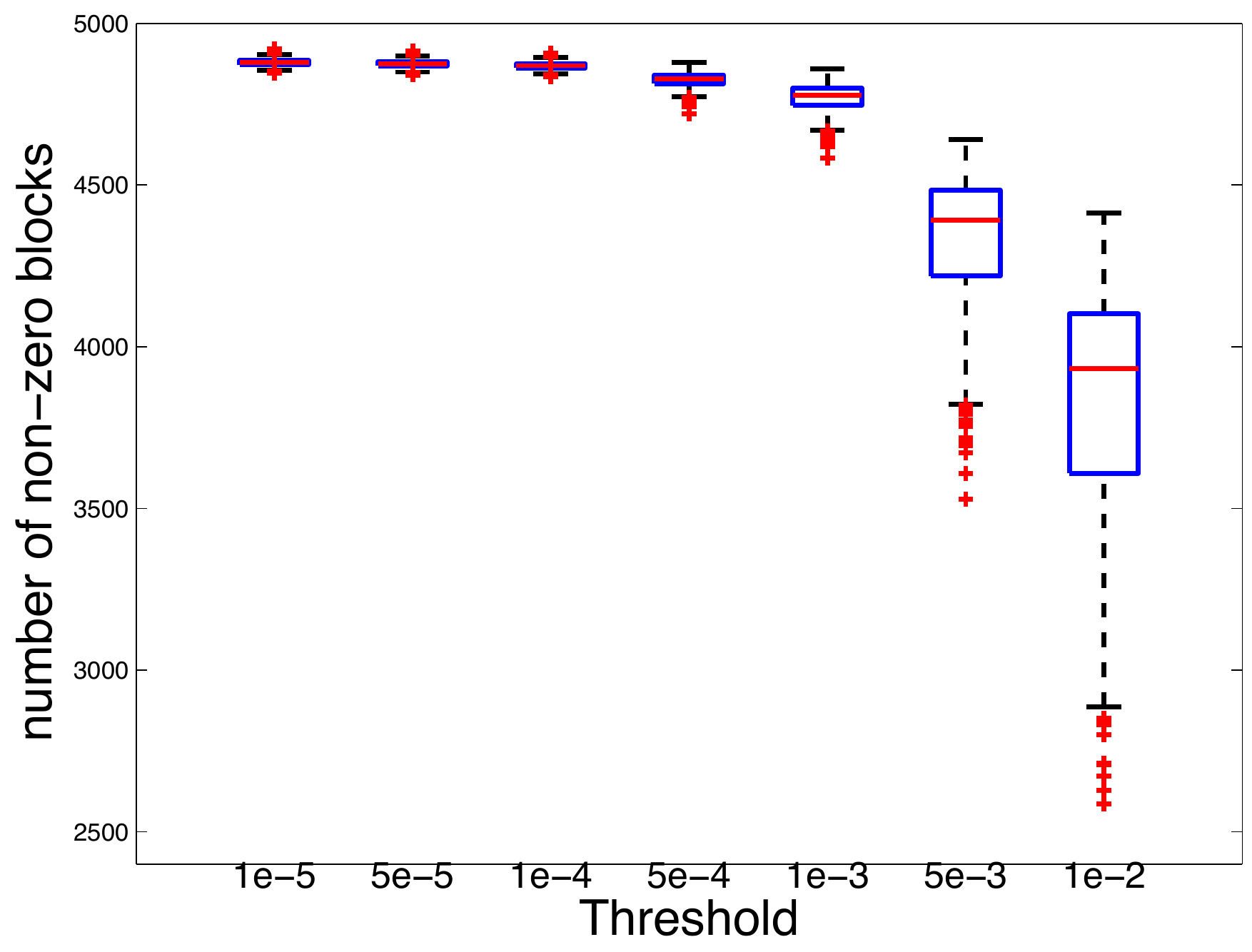} }
\subfigure[Diff. in entry-wise sparsities]{\includegraphics[width=0.31\columnwidth]{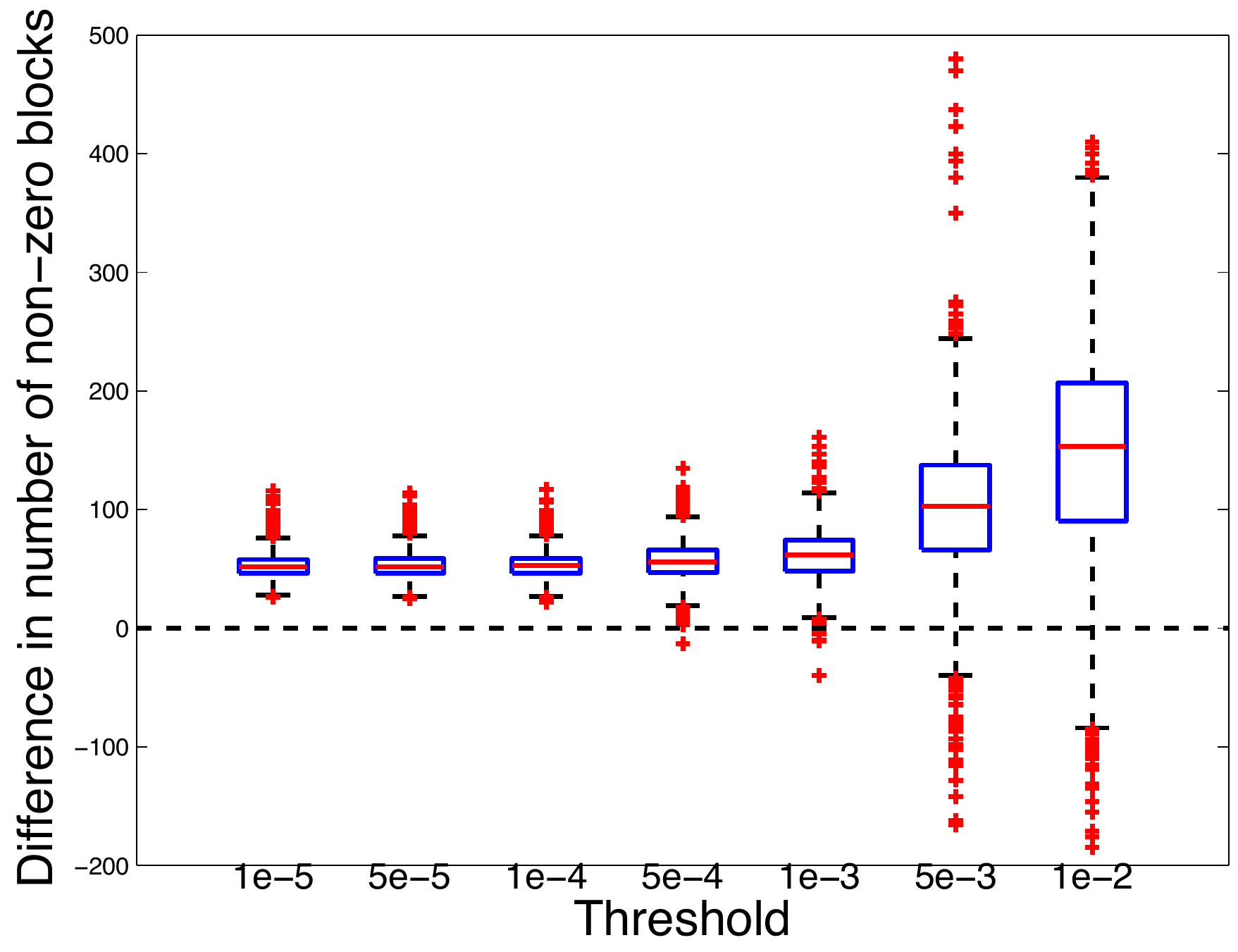}  \label{fig:mixed-diff-e}}\\
\caption{Comparison of mixed sparsity of the solutions to \eqref{eq:mixed-res} for $p=2$ and $p=\infty$. We present boxplots for group sparsity of $\xx$ and  entry-wise sparsity of $\ee$. The differences are calculated as  ($\#$ non-zero blocks/elements for $p=2$) - ($\#$ non-zero blocks/elements for $p=\infty$). We see that for $p=2$ we get better group sparsity of $\xx$, but we get a more sparse error $\ee$ when we use $p=\infty$.} 
  \label{fig:mixed}
\end{figure}

We now compare the classification results obtained with the solutions $\xx$ computed in our experiments.  For classification, we consider  the non-zero blocks in  $\xx$ and then assign the query image to the block, \ie subject class, for which it gives the least  $\ell_2$ residual $\|\bb -A_k\xx_k\|_2$. The results are presented in Table \ref{tab:mixed}. Notice that the classification results obtained with $p=\infty$ (the bidual relaxation) are better than those obtained using $p=2$. Since the classification of un-occluded images is already very good using $p=2$,  classification with $p=\infty$ gives only a minor improvement in this case. However, a more tangible improvement is noticed in the classification of the occluded images. Therefore the classification with $p=\infty$ is in general better than that obtained with $p=2$, which is considered the state-of-the-art for sparsity-based classification \cite{ElhamifarE2011}.

\begin{table}
\centering
\begin{tabular}{ | c | c | c | c | c|} 
\hline
& \multicolumn{2}{c|}{$p=2$}  & \multicolumn{2}{c|}{$p=\infty$}\\ \hline
& $\#$(correct results) & $\%$(correct results)& $\#$(correct results)& $\%$(correct results)\\ \hline
un-occluded & $655$ & $93.57\%$  & $663$ &  $94.71\%$\\
occluded &  $643$ & $53.58\%$  & $691$ &  $57.58\%$\\
total & 1298 & $68.32\%$ & $1324$ & $69.68\%$ \\ \hline
\end{tabular}
\caption{Classification results on the AR dataset using the solutions obtained by minimizing mixed sparsity. The test set consists of 700 un-occluded images and 1200 occluded images.}
\label{tab:mixed}
\end{table}

\section{Discussion}
We have presented a novel analysis of several sparsity minimization
problems which allows us to interpret several convex relaxations of
the original NP-hard primal problems as being equivalent to maximizing
their Lagrangian duals. The pivotal point of this analysis is the
formulation of mixed-integer programs which are equivalent to the
original primal problems.  While we have derived the biduals for only
a few sparsity minimization problems, the same techniques can also be
used to easily derive convex relaxations for other sparsity
minimization problems \cite{CevherV2008}.

An interesting result of our biduality framework is the ability to
compute a per-instance certificate of optimality by providing a lower
bound for the primal objective function. This is in contrast to most
previous research which aims to characterize either the subset of
solutions  or the set of conditions for perfect sparsity recovery
using the convex relaxations
\cite{CandesE2005,CandesE2008,DonohoD2003,DonohoD2004,DonohoD2006,FletcherA2008,Iouditski:arxiv,ReevesG2009}. In
most cases, the  conditions are either weak or hard to verify. More
importantly, these conditions needed to be pre-computed as opposed to
verifying the correctness of a solution at run-time. In lieu of this,
we hope that our proposed framework will prove an important step
towards per-instance verification of the solutions. Specifically, it is of interest in the future to explore tighter relaxations for the verification of the solutions.

\subsubsection*{Acknowledgments}
This research was supported in part by  ARO MURI W911NF-06-1-0076, ARL MAST-CTA W911NF-08-2-0004, NSF CNS-0931805, NSF CNS-0941463 and NSF grant 0834470.
The views and conclusions contained in this document are those of the authors and should not be interpreted as representing the official policies, either expressed or implied, of the Army Research Laboratory or the U.S. Government. The U.S. Government is authorized to reproduce and distribute for Government purposes notwithstanding any copyright notation herein. 

\small

\end{document}